\newtheorem{corollary}{Corollary}
\newtheorem{lemma}{Lemma}
\newtheorem{fact}{Fact}
\newtheorem{definition}{Definition}
\newtheorem{proposition}{Proposition}
\newcommand\LT{\ensuremath{\operatorname{LT}}}
\title{Probabilistic Circuits for Cumulative Distribution Functions}
\author[1]{Oliver Broadrick}
\author[1]{William Cao}
\author[1]{Benjie Wang}
\author[2]{Martin Trapp}
\author[1]{Guy Van den Broeck}
\affil[1]{%
    Computer Science Dept.\\
    University of California \\
    Los Angeles, California, USA
}
\affil[2]{%
    Department of Computer Science\\
    Aalto University\\
    Espoo, Finland
}
\begin{document}
\maketitle

\begin{abstract}
A probabilistic circuit (PC) succinctly expresses a function that represents a multivariate probability distribution and, given sufficient structural properties of the circuit, supports efficient probabilistic inference. Typically a PC computes the probability mass (or density) function (PMF or PDF) of the distribution. We consider PCs instead computing the cumulative distribution function (CDF). We show that for distributions over binary random variables these representations (PMF and CDF) are essentially equivalent, in the sense that one can be transformed to the other in polynomial time. We then show how a similar equivalence holds for distributions over finite discrete variables using a modification of the standard encoding with binary variables that aligns with the CDF semantics. Finally we show that for continuous variables, smooth, decomposable PCs computing PDFs and CDFs can be efficiently transformed to each other by modifying only the leaves of the circuit.
\end{abstract}

\section{Introduction}

Modeling multivariate probability distributions in a way that is both expressive and allows efficient probabilistic reasoning is a fundamental problem in the field of artificial intelligence. 
Probabilistic circuits (PCs) provide a unifying framework for a myriad of tractable probabilistic models and reduce tractability to syntactic properties of the underlying circuit \citep{diff_approach,spns,pcs}. 
In general, a PC computes a multilinear polynomial in its inputs, most commonly computing a probability mass (or density) function (PMF/PDF). 
However, other polynomials can be used to encode probability distributions and have been studied as alternative semantics for PCs, including generating functions \citep{pgcs,on_inference_pgcs,pmlr-v202-blaser23a} and characteristic functions \citep{ccs}. 
In this paper, we extend this line of work and consider the \emph{cumulative distribution function} (CDF) as a semantics for PCs. 

Unlike the PMF, which computes the probability of an input assignment, the CDF computes the probability of realizing any assignment with entries elementwise less than or equal to the input assignment.
The CDF exists %
for every real-valued multivariate probability distribution, which is not true for mass or density functions, and has broad applications in machine learning and statistics \citep{huang2009cumulative,huang2010maximum,HYVARINEN1999429,gresele2021independent}.
A particularly useful property of the CDF arises from its interpretation as a transformation, i.e., CDF transformed random variables are uniformly distributed.
This property has been heavily exploited in the literature, for example, in copulas \citep{NEURIPS2020_10eb6500}, for inverse transform sampling \citep{gentle2003random}, and density estimation via boosting \citep{awaya2021unsupervised}.

While CDFs are defined for arbitrary distributions over real-valued random variables, we consider three important special cases.
We begin with distributions over binary random variables for which PCs are known to be tractable when they express a multilinear polynomial that computes the PMF (called the \emph{PMF polynomial}). Therefore we consider multilinear polynomials that compute the CDF, which we call \emph{CDF polynomials}. We find that for binary random variables, CDF polynomials are exactly equal to probability generating functions (PGFs).
  This surprising equivalence immediately implies that a circuit computing the PMF (respectively CDF) can be transformed to a circuit computing the CDF (respectively PMF) in polynomial time, by recent results \citep{BroadrickUAI24,sanyam}. 
  Moreover, our new interpretation of PGFs as CDFs enables us to give an alternative and interesting proof of the transformation from PGFs to PMFs based on the generalized principle of inclusion-exclusion. 

Next we consider distributions over finite discrete random variables. Typically in the PC literature, such variables are handled by encoding them with binary variables, e.g., with a one-hot encoding. We find that by using a simple new encoding that respects the less-than-or-equal-to relation, we can ensure not only that the PMFs agree up to the encoding (as normal) but that also the CDFs agree up to the encoding. This allows us to reduce the finite discrete case to the binary case, applying the transformations for the binary case to obtain a similar equivalence.

Finally, we consider continuous variables, finding that for PCs which are smooth and decomposable -- standard structural properties used in the literature \citep{darwiche2002knowledge,pcs} -- simple modifications to the leaves of the PC enable transformations between PDFs and CDFs. Our results relating CDFs to PMFs/PDFs complement and extend those of \cite{BroadrickUAI24} and \cite{sanyam}, who find that several other PC semantics are equivalent for binary random variables but do not consider~CDFs.

\section{Background on circuits}\label{sec:background}

We study probabilistic circuits (PCs): computation graphs that, given sufficient structural properties, render inference tasks tractable. Let $\bm{X}=\{X_1,\ldots,X_n\}$ be random variables, and denote the set of all assignments to $\bm{X}$ by $\text{val}(\bm{X})$.

\begin{definition}
  A \emph{probabilistic circuit} (PC) in variables $\bm{X}=\{X_1,\ldots,X_n\}$ is a rooted directed acyclic graph. Each node $v$ is either (i) a product node, (ii) a sum node with edges to children labeled by weights $w_{v1},\ldots,w_{vk}\in\mathbb{R}$, or (iii) a leaf node, labeled by a function $l_v:\text{val}(X_i)\to \mathbb{R}$ for some $X_i$. 
  Each node $v$ (with children $v_1,\ldots,v_k$) computes a polynomial whose indeterminates are $l_1,\ldots,l_m$ (the functions labeling the leaves of the PC):
\begin{align*}
&p_v(l_1,\ldots,l_m)&&\\
&=\begin{cases}
\prod_{i=1}^kp_{v_i}(l_1,\ldots,l_m) &\text{if $v$ a product node}\\
\sum_{i=1}^kw_{vi}p_{v_i}(l_1,\ldots,l_m) &\text{if $v$ a sum node}\\
l_{v} &\text{if $v$ a leaf node.}
\end{cases}&&
\end{align*}
The polynomial $p$ computed by a PC is the polynomial computed by its root. The function computed by a PC is $P:\text{val}(\bm{X})\to\mathbb{R}$ given by
\[
P(\bm{x})=p(l_1(\bm{x}),\ldots,l_m(\bm{x})).
\]
Lastly, the size of a PC is the number of edges in it.
\end{definition}

If each $X_i$ is binary, taking values in $\{0,1\}$, we commonly consider the leaf functions $x_i$ and $\bar{x}_i$ respectively mapping the bit $b\in\{0,1\}=\text{val}(X_i)$ to $b$ and $1-b$. 
In the case that we only use the leaf functions $x_i$, then the polynomial computed by a PC and the function computed by a PC effectively coincide.

Note that we will typically assume the functions labeling the leaves, sometimes called \emph{input functions}, are tractable, meaning that arbitrary integrals and derivatives can be computed efficiently. 
We also note that the polynomials considered in this paper are \emph{multilinear}, meaning that they are linear in each variable. For example, the polynomials $x_1x_3-x_2x_3$ and $x_1x_2x_3+1$ are multilinear, but $x^2$ and $x_1x_2^7+1$ are not. For the remainder of this paper we will use circuit to mean PC.

\section{Cumulative Distribution Functions}\label{sec:pmf_cdf}

For real-valued random variables $X_1,\ldots,X_n$ the \emph{cumulative distribution function} (CDF) $F:\mathbb{R}^n\to [0,1]$ is
\[
F(x_1,\ldots,x_n)=\mathbb{P}[X_1\le x_1,\ldots,X_n\le x_n].
\]
This is a general notion; the CDF exists, e.g., regardless of whether the variables are discrete, continuous, or otherwise.
If the random variables are all discrete, meaning that they take values in some countable subset of the reals $D\subset \mathbb{R}$, then the distribution can also be specified by a \emph{probability mass function} (PMF) $f:D^n\to [0,1]$ given by
\[
f(x_1,\ldots,x_n)=\mathbb{P}[X_1=x_1,\ldots,X_n=x_n].
\]
If instead all the random variables are absolutely continuous, then there exists a \emph{probability density function} (PDF) 
$f:\mathbb{R}^n\to \mathbb{R}^{\ge 0}$ given by 
\[
f(x_1,\ldots,x_n)=\frac{\partial^n}{\partial x_1\ldots \partial x_n}F(x_1,\ldots,x_n).
\]
In the following sections we consider three cases. The first two concern discrete distributions: those over binary random variables in Section~\ref{sec:binary}, and those over finite discrete variables in Section~\ref{sec:finite_discrete}. We then consider continuous variables in Section~\ref{sec:continuous}.

\section{Binary Variables}\label{sec:binary}

\begin{figure}%
\center
    \begin{tabular}{ c c | c | c}
     $X_1$ & $X_2$ & $f$ & $F$ \\ 
    \hline
     0 & 0 & .1 & .1\\ 
     0 & 1 & .4 & .5\\ 
     1 & 0 & .2 & .3 \\ 
     1 & 1 & .3 & 1.0 
    \end{tabular}
    \begin{align*}
      p(x_1,x_2)&= -.2x_1x_2+.1x_1+.3x_2+.1 \tag{\cref{eq:pmfpoly}} \\
      c(x_1,x_2)& = .3x_1x_2+.2x_1+.4x_2+.1 \tag{\cref{eq:cdfpoly}} 
    \end{align*}
  \caption{A probability distribution over two binary random variables. The PMF and CDF functions are specified in the table, and the corresponding PMF and CDF polynomials $p$ and $c$ are given below. 
  }
\label{fig:example}
\end{figure}

We first consider the simplest setting: probability distributions over binary random variables. Let $X_1,\ldots,X_n$ be random variables taking values in $\{0,1\}\subset \mathbb{R}$. Then there exists a PMF $f:\{0,1\}^n\to [0,1]$ with $f(x_1,\ldots,x_n)=\mathbb{P}[X_1=x_1,\ldots,X_n=x_n]$. Moreover, there is a unique multilinear polynomial $p\in \mathbb{R}[x_1,\ldots,x_n]$ that computes $f$ (in the sense that it agrees with $f$ on all inputs in $\{0,1\}^n$) which we will call the \emph{PMF polynomial}:
\begin{equation}
  p(x_1,\ldots,x_n)=\sum_{S\subseteq [n]}f(v_S)\prod_{i\in S}x_i\prod_{i\notin S}(1-x_i)
  \label{eq:pmfpoly}
\end{equation}
where $[n]=\{1,\dots,n\}$ and $v_S\in\{0,1\}^n$ 
is the characteristic vector of $S$ ($v_S$ has $i$th entry $1$ if $i\in S$ and $i$th entry $0$ if $i\notin S$).
As an example, consider the PMF $f$ with PMF polynomial $p$ given in \cref{fig:example}.

Like the PMF, the CDF can also be uniquely expressed as a multilinear polynomial $c\in \mathbb{R}[x_1,\ldots,x_n]$ which we call the \emph{CDF polynomial}:
\begin{equation}
  c(x_1,\ldots,x_n)=\sum_{S\subseteq [n]}F(v_S)\prod_{i\in S}x_i\prod_{i\notin S}(1-x_i).
  \label{eq:cdfpoly}
\end{equation}
Note that $p$ and $c$ can be related simply:
\begin{align}\label{eq:pmf_cdf_relation}
c(x)=\sum_{y\le x}p(x)
\end{align}
where $y\le x$ is elementwise, meaning that $y_i\le x_i$ for every $i$.
Again consider the example in \cref{fig:example} with CDF $F$ and CDF polynomial $c$.

To understand how circuits computing PMF and CDF polynomials relate, we recall one additional polynomial representation for probability distributions, the \emph{probability generating function} (PGF). For binary random variables, this is the polynomial
\begin{align}\label{eq:pgf_def}
g(x_1,\ldots,x_n)=\sum_{S\subseteq [n]}f(v_S)\prod_{i\in S}x_i
\end{align}
where $v_S\in\{0,1\}^n$ is the characteristic vector of $S$.
We now make a perhaps surprising and satisfying observation. 

\begin{proposition}\label{prop:cdf_is_pgf}
Fix a probability distribution over binary random variables $X_1,\ldots,X_n$ taking values $\{0,1\}\subset \mathbb{R}$ with CDF polynomial $c$ and PGF $g$. Then $c=g$.
\end{proposition}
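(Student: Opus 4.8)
The plan is to argue by uniqueness of multilinear interpolation over the Boolean hypercube. Both $c$ (from \cref{eq:cdfpoly}) and $g$ (from \cref{eq:pgf_def}) are multilinear polynomials in $x_1,\ldots,x_n$, since in each expression the variables occur only to degree at most one. As the excerpt already records for the PMF polynomial, a multilinear polynomial is uniquely determined by its values on the $2^n$ points of $\{0,1\}^n$, and the same interpolation formula establishes this for any prescribed assignment of vertex values. Hence to prove the polynomial identity $c=g$ it suffices to show that $c$ and $g$ agree at every vertex $v_S$, $S\subseteq[n]$.

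By the defining formula \cref{eq:cdfpoly}, the polynomial $c$ was constructed precisely so that $c(v_S)=F(v_S)$ for all $S$, so the remaining work is to evaluate $g(v_S)$. The crux is the vertex-evaluation of the generating function: at the point $x=v_S$, a monomial $\prod_{i\in T}x_i$ takes the value $1$ when every index of $T$ belongs to $S$, that is when $T\subseteq S$, and takes the value $0$ otherwise. Summing \cref{eq:pgf_def} accordingly yields $g(v_S)=\sum_{T\subseteq S}f(v_T)$.

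It then remains to identify this partial sum with the CDF. Since $T\subseteq S$ is exactly the elementwise condition $v_T\le v_S$, we have $g(v_S)=\sum_{v_T\le v_S}f(v_T)$, which by the PMF/CDF relation in \cref{eq:pmf_cdf_relation} equals $F(v_S)$; equivalently, one checks directly from $F(v_S)=\mathbb{P}[X_1\le (v_S)_1,\ldots,X_n\le (v_S)_n]$ that these probabilities sum to the same value. Combining the two evaluations gives $g(v_S)=F(v_S)=c(v_S)$ for every $S\subseteq[n]$, so $c$ and $g$ are multilinear polynomials agreeing throughout $\{0,1\}^n$, and uniqueness of the interpolant forces $c=g$.

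I expect the single subtle point to be the vertex-evaluation identity stating that $\prod_{i\in T}(v_S)_i$ equals $1$ exactly when $T\subseteq S$; once this selection principle is made explicit, the coincidence of $g(v_S)$ with the partial sum defining $F(v_S)$ is immediate and the remainder is bookkeeping. There is no deeper obstacle, which is what makes the equivalence pleasantly clean.
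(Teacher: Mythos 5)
Your proposal is correct and follows essentially the same route as the paper's proof: both reduce the polynomial identity to agreement of the induced functions on $\{0,1\}^n$ (your vertex-by-vertex interpolation argument is just the paper's function-equality computation phrased pointwise), both hinge on the same observation that $\prod_{i\in T}x_i$ vanishes at a Boolean point unless $T\subseteq S_x$, and both conclude via uniqueness of the multilinear polynomial agreeing on the hypercube.
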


\begin{proof}
While $c$ and $g$ are polynomials, we abuse notation and use the same names to refer to the functions $c,g:\{0,1\}^n\to \mathbb{R}$ that they induce. Observe then that
\begin{align*}
g(x)&=\sum_{S\subseteq [n]}f(v_S)\prod_{i\in S}x_i && \\
&=\sum_{y\le 1^n}f(y)\prod_{i:y_i=1}x_i && \\
&= \sum_{y\le x}f(y) =c(x).
\end{align*}
Here the first equality follows from definition (Eq. \ref{eq:pgf_def}), the second equality follows by identifying sets with their characteristic vectors, the third equality holds because if $y> x$ (elementwise) then $\prod_{i\in S}x_i=0$, and the final equality follows from Eq. \ref{eq:pmf_cdf_relation}. Finally, the equality of the functions implies equality of the polynomials.
\end{proof}

Proposition~\ref{prop:cdf_is_pgf} immediately implies that circuits computing $p$ (respectively $c$) can be transformed to circuits computing $c$ (respectively $p$) in polynomial time\footnote{The complexity bound given in \cref{thm:main} is improved by a factor of $n$ compared to that presented in \citep{BroadrickUAI24}; this improvement follows from the use of more efficient homogenization based on polynomial interpolation \citep[Lemma~5.4]{saptharishi2015survey}.} by the results of \citet{BroadrickUAI24} and \citet{sanyam}. In other words, for binary variables circuits computing PMF polynomials and circuits computing CDF polynomials are equally succinct probabilistic models.

\begin{corollary}\label{thm:main}
Fix a probability distribution over $n$ binary random variables with PMF polynomial $p$ and CDF polynomial $c$. A circuit of size $s$ computing $p$ (respectively $c$) can be transformed to a circuit computing $c$ (respectively $p$) in time $O(ns)$.
\end{corollary}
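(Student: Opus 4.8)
The plan is to reduce the statement to transformations that are already available in the literature. By \cref{prop:cdf_is_pgf} the CDF polynomial $c$ is \emph{equal} to the PGF $g$, so a circuit computing $c$ is nothing but a circuit computing $g$, and conversely. Hence transforming between circuits for $p$ and circuits for $c$ is exactly the problem of transforming between circuits for the PMF polynomial and circuits for the PGF, which is solved in both directions by \citet{BroadrickUAI24} and \citet{sanyam}. The corollary then follows by appealing to those transformations, the one point requiring attention being the claimed $O(ns)$ running time.

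Concretely, I would first record how the coefficients of the two polynomials relate, via Möbius inversion on the Boolean lattice: writing $p=\sum_S a_S\prod_{i\in S}x_i$ and recalling $g=\sum_S f(v_S)\prod_{i\in S}x_i$, one has $a_S=\sum_{T\subseteq S}(-1)^{|S|-|T|}f(v_T)$, with the inverse (zeta) transform $f(v_S)=\sum_{T\subseteq S}a_T$ going the other way. Conceptually this is the rational substitution $p(x)=\prod_i(1-x_i)\,g\!\left(\tfrac{x_1}{1-x_1},\ldots,\tfrac{x_n}{1-x_n}\right)$ together with its inverse $g(x)=\prod_i(1+x_i)\,p\!\left(\tfrac{x_1}{1+x_1},\ldots,\tfrac{x_n}{1+x_n}\right)$; the divisions are only apparent, since applying the substitution separately to each homogeneous component clears all denominators. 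This is why the algorithm naturally proceeds in two stages: homogenize the given circuit into its $n+1$ homogeneous parts, then apply the graded (division-free) substitution to each part and sum.

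The main obstacle, and the source of the improvement advertised in the footnote, is the homogenization step. A direct degree-tracking homogenization incurs an extra factor of $n$, yielding $O(n^2 s)$. To obtain $O(ns)$ I would instead use interpolation-based homogenization \citep[Lemma~5.4]{saptharishi2015survey}: introduce a fresh variable $t$, form $q(tx_1,\ldots,tx_n)$, evaluate at $n+1$ distinct scalars $t_0,\ldots,t_n$, and recover each homogeneous component by inverting the fixed Vandermonde system on the resulting outputs. This separates all components with only $O(n)$ overhead, producing a circuit of size $O(ns)$ in time $O(ns)$; the subsequent graded substitution and recombination add only $O(ns)$ further, so the total is $O(ns)$. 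I would carry this out for the direction $p\to c$ and then observe that the reverse direction $c\to p$ is entirely symmetric, using the invertibility of the coefficient transform recorded above.
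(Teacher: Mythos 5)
Your proposal takes essentially the same route as the paper: the paper's proof of \cref{thm:main} is precisely to invoke \cref{prop:cdf_is_pgf} (identifying $c$ with the PGF $g$) and then appeal to the PMF--PGF circuit transformations of \citet{BroadrickUAI24} and \citet{sanyam}, with the improvement from $O(n^2s)$ to $O(ns)$ obtained, exactly as you say, via interpolation-based homogenization \citep[Lemma~5.4]{saptharishi2015survey}. The mechanistic details you sketch (M\"obius/zeta coefficient transforms and the rational substitutions) are correct and consistent with what the cited works do, so they are a harmless elaboration of the same argument rather than a different proof.
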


While \citet{BroadrickUAI24} and \citet{sanyam} already show how to transform a circuit computing a PGF to a circuit computing a PMF, Proposition~\ref{prop:cdf_is_pgf} allows us to give an alternative proof, which, while essentially equivalent, provides a satisfying explanation for what the transformation is doing. Specifically, our interpretation of the PGF as a CDF allows us to view the transformation as an efficient application of the generalized principle of inclusion-exclusion. We provide the proof in \cref{appendix}.

\section{Finite Discrete Variables}\label{sec:finite_discrete}

We now consider distributions over finite discrete random variables, those taking values in some finite set of reals, $K\subset \mathbb{R}$. In particular, we focus on the case $K=\{0,1,\ldots,k-1\}$ for some $k\in\mathbb{N}$. We note that any other finite set of reals of size $k$ may be bijected with this set, as well as other categorical or ordinal random variables.

The standard approach in the probabilistic circuits literature for handling such finite discrete variables is to encode them with binary indicators, typically using a one-hot encoding. In particular, a distribution $\mathbb{P}$ over $K$-valued random variables is mapped to a distribution $\mathbb{P}'$ over binary random variables such that the two distributions are equivalent up to an injective mapping $\phi:K^n\to \{0,1\}^{kn}$. That is, for all $x\in K^n$, we have 
$\mathbb{P}(x)=\mathbb{P}'(\phi(x))$.
While the PMFs agree up to $\phi$, the same cannot be said in general for the CDFs. While both CDFs are sums over their respective PMFs, the terms in each of these sums will not in general be the same up to $\phi$ (for example, with a one-hot encoding even in the univariate case, $2\le 3$ but $(0,1,0)\nleq (0,0,1)$).

However, we observe that there is a simple alternative encoding of finite discrete variables with binary variables that does respect the elementwise less-than-or-equal-to relation.

\begin{definition}(Less-Than Encoding)
For $k\in \mathbb{N}$, the Less-Than Encoding is the function $\text{LT}_k:\{0,1,\ldots,k-1\}\to \{0,1\}^{k}$ given by 
  \[
    \LT_k(x) \coloneqq \left[\begin{matrix} \bm{1}_x \\ \bm{0}_{k-x} \end{matrix} \right]
  \]
  with $\bm{1}_n = (1, \dots, 1)^\top \in \mathbb{R}^n$ and $\bm{0}_n = (0,\dots,0)^\top \in \mathbb{R}^n$ respectively.
\end{definition}

Note that the Less-Than Encoding respects the less-than-or-equal-to relation in the sense that for any $x,y\in \{0,1,\ldots,k-1\}$, $x\le y$ if and only if $\text{LT}_k(x)\le \text{LT}_k(y)$.

We are now free to map a distribution $\mathbb{P}$ over $K$-valued random variables to a distribution $\mathbb{P}'$ over binary random variables using the Less-Than Encoding and preserving not only 
\[
p(x)=p'(\text{LT}_k(x))
\]
but also 
\[
c(x)=\sum_{y\le x}p(y) =\sum_{y\le \text{LT}_k(x)}p'(y)  =c(\text{LT}_k(x)).
\]
In particular, \cref{thm:main} can be applied to distributions over finite discrete variables using this encoding.

\section{Continuous Variables}\label{sec:continuous}

Lastly we consider the case where all of the variables are continuous with the distribution admitting a joint density function. Let $X_1,\ldots,X_n$ be real-valued random variables with CDF $F:\mathbb{R}^n\to [0,1]$ and PDF $f:\mathbb{R}^n\to \mathbb{R}^{\ge 0}$. In this setting, we show that it is straightforward to efficiently transform between circuits computing the PDF and CDF \textit{when the circuits are smooth and decomposable}. Smoothness and decomposability are standard structural properties of PCs that enable efficient inference \citep{darwiche2002knowledge,pcs}. In order to define them, we use the notion of the \emph{scope} of a node $v$ denoted $\text{scope}(v)$ which is the set of all variables appearing in the sub-PC rooted at $v$.

\begin{definition}[Smoothness]
A sum node $v$ with children $v_1, \ldots, v_k$ is smooth if the scope of its children are equal to its own scope: $\text{scope}(v)=\text{scope}(v_i)$ for $i=1,\ldots,k$.
\end{definition}

\begin{definition}[Decomposable]
A product node $v$ with children $v_1$ and $v_2$ is decomposable if the scope of its children partition its scope: $\text{scope}(v)=\text{scope}(v_1)\cup \text{scope}(v_2)$ and $\text{scope}(v_1)\cap \text{scope}(v_2)=\emptyset$.
\end{definition}

It is well known that a smooth and decomposable PC supports efficient integration, by pushing integrations to the leaves \citep{pcs}. In particular, the following proposition follows by simply replacing each input distribution $l(x)$ with a new input distribution $l_{CDF}(x)=\int_{-\infty}^x l(t) dt$.

\begin{proposition}\label{prop:pdf_to_cdf}
A smooth, decomposable PC of size $s$ computing the PDF $f$ can be transformed to a smooth, decomposable PC of size $s$ computing the CDF $F$.
\end{proposition}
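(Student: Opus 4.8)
The plan is to take the hint literally and build the transformed circuit $C'$ by leaving the graph structure of $C$ untouched while replacing each leaf function $l$ on a variable $X_i$ by its partial CDF $l_{CDF}(x)=\int_{-\infty}^{x}l(t)\,dt$. Since only the labels at the leaves change and every scope is unchanged, $C'$ has the same number of edges (size $s$) and remains smooth and decomposable; the tractability assumption on input functions guarantees each $l_{CDF}$ is a legitimate leaf. It therefore remains only to verify that the root of $C'$ computes $F$.

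First I would fix the right inductive invariant. For a node $v$ write $f_v$ for the function computed at $v$ in $C$, viewed as a function of the variables in $\text{scope}(v)=\{i_1,\dots,i_m\}$, and let $F_v$ denote its partial CDF,
\[
F_v(x)=\int_{-\infty}^{x_{i_1}}\cdots\int_{-\infty}^{x_{i_m}} f_v(t_{i_1},\dots,t_{i_m})\,dt_{i_1}\cdots dt_{i_m}.
\]
The claim to prove by bottom-up structural induction is that the corresponding node $v'$ in $C'$ computes exactly $F_v$. Applied at the root, whose scope is $\{1,\dots,n\}$, this yields that $C'$ computes $\int_{-\infty}^{x_1}\cdots\int_{-\infty}^{x_n} f = F$, which is the desired conclusion.

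The base case is immediate: a leaf $v$ on $X_i$ has $f_v=l$, and $v'$ computes $l_{CDF}=F_v$ by construction. The two inductive steps are where the structural properties do the work. At a decomposable product node $v=v_1\times v_2$, the scopes $S_1,S_2$ of the children are disjoint, so $f_{v_1}$ depends only on $S_1$-variables and $f_{v_2}$ only on $S_2$-variables; Fubini then factors the integral over $S_1\cup S_2$ into $F_{v_1}\,F_{v_2}$, which is what $v'$ computes once the induction hypothesis is applied to the children. At a smooth sum node $v$ with children $v_1,\dots,v_k$ and weights $w_{v1},\dots,w_{vk}$, all children share the scope $S_v$, so integrating over $S_v$ commutes with the weighted sum by linearity of the integral, giving $\sum_{i} w_{vi} F_{v_i}$; equality of scopes is precisely what makes each $F_{v_i}$ an integral over the same variable set $S_v$, so that the hypotheses combine cleanly.

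I expect the only real subtlety to be stating and using the invariant correctly rather than any hard computation: the content of the argument is exactly that decomposability licenses factoring the multivariate integral at products while smoothness guarantees the child integrals are taken over matching variable sets at sums, so that \emph{pushing the integration to the leaves} is consistent node by node. A minor point worth recording is that a single variable may label several leaves, but under decomposability any two such leaves can only be combined (directly or after nesting) through sum nodes, where smoothness forces their scopes to agree; hence the induction absorbs the repetition automatically and no leaf replacement ever needs to account for more than its own one-dimensional integral.
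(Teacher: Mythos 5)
Your proposal is correct and matches the paper's argument exactly: the paper proves this proposition by the same leaf replacement $l \mapsto l_{CDF}(x)=\int_{-\infty}^{x} l(t)\,dt$, citing the well-known fact that smooth, decomposable PCs support integration by pushing integrals to the leaves. Your structural induction (linearity at smooth sums, factoring the integral via disjoint scopes at decomposable products) is precisely the standard argument the paper leaves implicit, so there is nothing to correct.
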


In order to perform a transformation in the other direction (i.e., from CDF to PDF), decomposability alone suffices. Intuitively, partial derivatives can be pushed over sum nodes by linearity and can be pushed over product nodes because decomposability ensures that one of the two terms in the product rule is zero. A proof of the following claim is given in \cref{appendix}.

\begin{proposition}\label{prop:cdf_to_pdf}
A decomposable PC of size $s$ computing the CDF $F$ can be transformed to a decomposable PC of size $s$ computing the PDF $f$.
\end{proposition}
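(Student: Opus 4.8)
The plan is to realize the PDF as the mixed partial derivative $f=\frac{\partial^n}{\partial x_1\cdots\partial x_n}F$ and to implement this operator by pushing it from the root down to the leaves, so that only the leaves of the circuit change. Concretely, I would define the transformation that leaves the underlying DAG, all product nodes, and all sum weights untouched, and replaces each leaf function $l_v$ (a univariate CDF over its variable $X_i$) by its derivative $l'_v=\frac{\partial}{\partial x_i}l_v$ (the corresponding univariate density). Because the leaf functions are assumed tractable, each such derivative is available efficiently, the graph is unchanged, and so the resulting circuit has exactly the same size $s$. It then remains to prove that this modified circuit computes $f$.

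To show correctness I would proceed by structural induction over the DAG, processing each node after its children, establishing the invariant that the transformed node $v$ computes $D_v P_v$, where $P_v$ is the function originally computed at $v$ and $D_v=\prod_{X_j\in\text{scope}(v)}\frac{\partial}{\partial x_j}$ is the mixed partial over exactly the variables in the scope of $v$. The base case is immediate: a leaf with scope $\{X_i\}$ originally computes $l_v$ and, after the transformation, computes $\frac{\partial}{\partial x_i}l_v=D_v P_v$. For a sum node the claim follows from linearity of differentiation, which lets $D_v$ distribute over the weighted sum $\sum_i w_{vi}P_{v_i}$ and reduces the node to its transformed children via the induction hypothesis.

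The crucial step, and the one I expect to be the main obstacle, is the product node $v$ with children $v_1,v_2$. Here I would invoke decomposability, which guarantees that $\text{scope}(v_1)$ and $\text{scope}(v_2)$ are disjoint and partition $\text{scope}(v)$. The operator $D_v$ is the product of the single-variable derivatives $\frac{\partial}{\partial x_j}$ over $j$ with $X_j\in\text{scope}(v)$; for each such $j$ the variable $X_j$ lies in the scope of exactly one factor, so when the product rule is applied to $P_v=P_{v_1}P_{v_2}$ the term that differentiates the other factor vanishes. Iterating this over all variables in $\text{scope}(v)$ shows that the mixed partial factorizes as $D_v(P_{v_1}P_{v_2})=(D_{v_1}P_{v_1})(D_{v_2}P_{v_2})$, which is precisely the product of the two transformed children by the induction hypothesis. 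This factorization, together with the careful bookkeeping that $D_v$ acts only on $\text{scope}(v)$, is the heart of the argument and the place where decomposability is exactly what is needed.

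Finally, applying the invariant at the root, whose scope is $\{X_1,\ldots,X_n\}$, gives that the transformed circuit computes $\frac{\partial^n}{\partial x_1\cdots\partial x_n}F=f$, completing the proof. I would close by remarking that, in contrast to Proposition~\ref{prop:pdf_to_cdf}, where smoothness was used to push integrations to the leaves, this direction needs only decomposability: differentiation passes through sum nodes for free by linearity, and the product rule collapses to a single surviving term at every decomposable product node.
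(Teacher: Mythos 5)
Your proposal is correct and follows essentially the same route as the paper's proof: differentiation is pushed through sum nodes by linearity and through decomposable product nodes by observing that one term of the product rule vanishes, so that only the leaves are replaced by their derivatives and the size is unchanged. Your explicit invariant that each transformed node $v$ computes $\prod_{X_j\in\text{scope}(v)}\frac{\partial}{\partial x_j}P_v$ is just a slightly more formal statement of the same induction the paper carries out.
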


\section{Conclusion}

We study a basic question: what if a tractable PC computes a CDF instead of a PMF or PDF? We show that in three important cases the two models are roughly equivalent. For distributions over binary random variables, we observe that the CDF polynomial is exactly the probability generating function (PGF) of the distribution. This observation allows us to conclude that
PCs computing a PMF polynomial or CDF polynomial can be transformed to each other in polynomial time, and we were able to give a new explanation for how a circuit computing the PGF can be transformed to a circuit computing the PMF. We then showed how to reduce the finite discrete case to the binary case with a slight modification to the standard encoding with binary indicators that respects the less-than-or-equal-to relation needed to compute the CDF. Finally, we showed how the standard structural properties of smoothness and decomposability suffice in the continuous case to make transformations between PDFs and CDFs simplify to modifications of the leaves.
We leave open the question of whether similar results hold for circuit models handling discrete variables over infinite domains, mixed distributions, or finite discrete variables with more efficient encodings into binary variables, e.g. like those used by \cite{cao2023scaling} for finite discrete variables or those obtained by \citet{garg2023bit} for discretizations of continuous densities.

\begin{acknowledgements}
We thank Sanyam Agarwal for referring us to \citep{saptharishi2015survey}.
\end{acknowledgements}

\bibliography{refs}

\onecolumn

\title{Probabilistic Circuits for Cumulative Distribution Functions\\(Supplementary Material)}
\maketitle

\appendix

\section{Proofs}\label{appendix}

We first give an alternative proof that, for distributions over binary random variables, a circuit computing the PGF can be transformed to a circuit computing the PMF in polynomial time.

We identify vectors $x\in\{0,1\}^n$ with sets $S_x=\{i:x_i=1\}$ and then view $p$ and $c$ as set functions, $p,c:\mathcal{P}([n])\to\mathbb{R}$ where $\mathcal{P}([n])$ is the power set of $[n]$, yielding
\begin{equation}\label{eq:cdf_pmf_as_set_functions}
c(S)=\sum_{T\subseteq S}p(T)
\end{equation}
from Equation~\ref{eq:pmf_cdf_relation}. We now recall the generalized principle of inclusion-exclusion, which allows `inversion' of set-functions of the form in \cref{eq:cdf_pmf_as_set_functions} (see e.g. \cite{brualdi2004introductory}).

\begin{fact}[Inclusion-Exclusion]\label{fact:inclusion_exclusion}
For a finite set $S$, let $f:\mathcal{P}(S)\to \mathbb{R}$ be an arbitrary function (where $\mathcal{P}(S)$ is the power set of $S$), and let $g:\mathcal{P}(S)\to \mathbb{R}$ be given by:
\begin{align*}
g(A)=\sum_{B\subseteq A}f(B).
\end{align*}
Then,
\begin{align*}
f(A)=\sum_{B\subseteq A}(-1)^{|A|-|B|}g(B).
\end{align*}
\end{fact}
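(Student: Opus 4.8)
The plan is to verify the claimed inversion formula by direct substitution followed by a single interchange in the order of summation, which reduces the entire identity to evaluating one alternating binomial sum. First I would take the right-hand side $\sum_{B\subseteq A}(-1)^{|A|-|B|}g(B)$ and expand $g(B)$ using its defining relation $g(B)=\sum_{C\subseteq B}f(C)$. This produces a double sum over pairs $(B,C)$ constrained by $C\subseteq B\subseteq A$.

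Next I would swap the order of summation so that the outer index ranges over $C\subseteq A$ and the inner index ranges over all $B$ with $C\subseteq B\subseteq A$, pulling the factor $f(C)$ outside the inner sum. What remains is the coefficient $\sum_{B:\,C\subseteq B\subseteq A}(-1)^{|A|-|B|}$ attached to each $f(C)$, so the whole claim hinges on showing that this coefficient is $1$ when $C=A$ and $0$ otherwise.

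To evaluate that coefficient I would set $m=|A\setminus C|$ and observe that choosing a set $B$ with $C\subseteq B\subseteq A$ is equivalent to choosing which $j$ of the $m$ elements of $A\setminus C$ to adjoin to $C$; there are $\binom{m}{j}$ such choices and for each one $|A|-|B|=m-j$. Hence the inner sum becomes $\sum_{j=0}^{m}\binom{m}{j}(-1)^{m-j}=(1-1)^m$ by the binomial theorem, which equals $0$ whenever $m\ge 1$ and equals $1$ exactly when $m=0$, i.e. when $C=A$. Only the term $C=A$ then survives the outer sum, collapsing the right-hand side to $f(A)$ and completing the argument.

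I expect the main obstacle to be purely the bookkeeping: justifying the summation swap cleanly and reindexing the inner sum so that it manifestly becomes an alternating binomial sum in $m=|A\setminus C|$. Once that reindexing is in place the binomial theorem finishes the proof immediately. As an aside, I would note that the statement is the special case of M\"obius inversion on the Boolean lattice $\mathcal{P}(S)$, whose M\"obius function is $\mu(C,B)=(-1)^{|B|-|C|}$; the direct computation above is essentially a self-contained derivation of that fact for this lattice.
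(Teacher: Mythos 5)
Your proof is correct. Note that the paper does not actually prove this statement: it is presented as a known \emph{Fact}, with the proof deferred to a standard combinatorics reference (Brualdi), and is then simply applied to invert the CDF-to-PMF relation $c(S)=\sum_{T\subseteq S}p(T)$. Your argument --- expanding $g(B)=\sum_{C\subseteq B}f(C)$, swapping the (finite, hence unproblematic) double sum to isolate the coefficient $\sum_{B:\,C\subseteq B\subseteq A}(-1)^{|A|-|B|}$ of each $f(C)$, and reindexing over $j$-element subsets of $A\setminus C$ to reduce it to $\sum_{j=0}^{m}\binom{m}{j}(-1)^{m-j}=(1-1)^{m}$, which vanishes unless $m=|A\setminus C|=0$ --- is exactly the standard textbook derivation the paper is implicitly invoking, and your closing remark correctly situates it as the computation of the M\"obius function $\mu(C,B)=(-1)^{|B|-|C|}$ of the Boolean lattice. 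So your proposal is a complete, self-contained justification of a step the paper leaves to the literature; there is no gap.
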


Therefore, applying \cref{fact:inclusion_exclusion} to \cref{eq:cdf_pmf_as_set_functions}, we obtain
\begin{equation}\label{eq:cdf_in_ex}
p(S)=\sum_{T\subseteq S}(-1)^{|S|-|T|}c(T).
\end{equation}
While this successfully expresses $p$ in terms of $c$, it also introduces a sum over exponentially many terms, and so any direct construction of a circuit based on this expression yields a circuit of exponential size. However it is possible to use a certain form of $c$ to compute \cref{eq:cdf_in_ex} in a single forward pass (and to construct a circuit for $c$).

For any multilinear polynomial
\begin{align*}
f(x_1,\ldots,x_n)&=\sum_{S\subseteq[n]}\alpha_S\prod_{i\in S}x_i&&
\end{align*}
with $\alpha_S\in\mathbb{R}$, we define the \emph{network}\footnote{Here the term network refers to such polynomials' origin in Bayesian Network inference \citep{diff_approach}.} form of $f$ as 
\begin{align*}
\bar{f}(x_1,\ldots,x_n,\bar{x}_1,\ldots,\bar{x}_n)&=\sum_{S\subseteq[n]}f(v_S)\prod_{i\in S}x_i\prod_{i\notin S}\bar{x}_i&&
\end{align*}
where $v_S\in\{0,1\}^n$ has $x_i=1$ for $i\in S$ and $x_i=0$ for $i\notin S$. 
Note that if we have a circuit computing $\bar{f}$, then a circuit computing $f$ can be obtained easily by replacing each $\bar{x}_i$ with $1-x_i$.
The following lemma allows us to also efficiently transform a circuit computing $f$ to a circuit computing $\bar{f}$.\footnote{The complexity bound given in \cref{lem:f_barf} is improved by a factor of $n$ compared to that presented in \citep{BroadrickUAI24}; this improvement follows from the use of more efficient homogenization based on polynomial interpolation \citep[Lemma~5.4]{saptharishi2015survey}.}
\begin{lemma}[\citet{BroadrickUAI24}]\label{lem:f_barf}
Given a circuit of size $s$ computing multilinear polynomial $f$, a circuit computing $\bar{f}$ can be constructed in time $O(ns)$.
\end{lemma}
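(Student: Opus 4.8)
The plan is to prove the lemma by viewing the network polynomial $\bar f$ as a homogenization of $f$ and then computing that homogenization efficiently via polynomial interpolation. The easy half of the correspondence is already noted in the text: substituting $\bar x_i \mapsto 1-x_i$ into a circuit for $\bar f$ yields a circuit for $f$ of the same size. For the converse I would first record the algebraic identity that makes the target explicit. Writing $f=\sum_{S\subseteq[n]}\alpha_S\prod_{i\in S}x_i$ and using $f(v_S)=\sum_{T\subseteq S}\alpha_T$, a short calculation (swap the order of summation and collect the factors indexed by each coordinate) gives
\[
\bar f(x,\bar x)=\sum_{T\subseteq[n]}\alpha_T\prod_{i\in T}x_i\prod_{i\notin T}(x_i+\bar x_i),
\]
so that $\bar f$ is exactly the polynomial obtained from $f$ by completing each monomial $\prod_{i\in T}x_i$ with the companion factor $\prod_{i\notin T}(x_i+\bar x_i)$ over its missing coordinates. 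In particular $\bar f$ is multilinear, homogeneous of total degree $n$ in the $2n$ variables $\{x_i,\bar x_i\}$ with exactly one of $x_i,\bar x_i$ per coordinate, and satisfies $\bar f(x,\mathbf 1-x)=f(x)$.

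Since the map $f\mapsto\bar f$ is linear, it suffices to carry out this completion on each homogeneous part of $f$ separately. Let $f^{(d)}$ denote the degree-$d$ homogeneous component of $f$; then $\bar f=\sum_{d=0}^n\overline{f^{(d)}}$, and the role of the companion variables is pinned down by the identity $\overline{f^{(d)}}(x,\,w\mathbf 1-x)=w^{\,n-d}f^{(d)}(x)$ for a fresh scalar $w$, whose sum over $d$ recovers the ordinary single-variable homogenization $f^{\mathrm{hom}}(x,w)=\sum_d w^{\,n-d}f^{(d)}(x)$. The plan is therefore: (i) extract circuits for all the homogeneous components $f^{(0)},\dots,f^{(n)}$ from the given circuit for $f$, and (ii) recombine them, attaching to each degree-$d$ piece the per-coordinate padding $\prod_{i\notin T}(x_i+\bar x_i)$ in place of the uniform padding $w^{\,n-d}$.

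For step (i) I would invoke the interpolation-based homogenization of \citep[Lemma~5.4]{saptharishi2015survey}: evaluating the circuit on the $n+1$ uniformly scaled inputs $(\zeta_j x_1,\dots,\zeta_j x_n)$ for distinct scalars $\zeta_0,\dots,\zeta_n$ produces $n+1$ copies of the circuit of total size $O(ns)$, and each $f^{(d)}$ is a fixed (inverse-Vandermonde) linear combination of these evaluations; this is the source of the claimed factor-$n$ improvement over the layered homogenization underlying \citet{BroadrickUAI24}, which instead splits every gate into $n+1$ graded copies and pays an extra factor of $n$ at the product gates. The main obstacle is step (ii): the padding to be installed is coordinate-dependent (the missing factor for index $i$ is $x_i+\bar x_i$, not a shared symbol), so it cannot be produced by any single substitution into $f^{\mathrm{hom}}$, and the naive remedy of conditioning on the coordinates one at a time — replacing the running polynomial $q$ by $\bar x_i\,q|_{x_i=0}+x_i\,q|_{x_i=1}$ for $i=1,\dots,n$ — copies the circuit at each step and blows up to size $2^n s$. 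I would therefore argue that the completion can still be realized within the $O(ns)$ budget by performing it at the level of the $n+1$ graded pieces rather than coordinate by coordinate, introducing the companion factors globally and sharing them across monomials; verifying that this recombination is both correct (it reproduces the displayed identity) and size-preserving is the delicate part, after which the sanity check $\bar f(x,\mathbf 1-x)=f(x)$ confirms the construction and the overall running time is $O(ns)$.
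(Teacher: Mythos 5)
Your setup correctly reconstructs the frame that the paper only gestures at: the paper itself gives no proof of \cref{lem:f_barf} at all, importing it from \citet{BroadrickUAI24} and crediting the factor-$n$ improvement to interpolation-based homogenization \citep[Lemma~5.4]{saptharishi2015survey}. Within that frame, several of your steps check out: the identity $\bar f(x,\bar x)=\sum_{T\subseteq[n]}\alpha_T\prod_{i\in T}x_i\prod_{i\notin T}(x_i+\bar x_i)$ is correct, as is the sanity check $\bar f(x,\mathbf{1}-x)=f(x)$, the observation that $\bar f$ is homogeneous of degree $n$, the $O(ns)$ extraction of all graded components $f^{(0)},\dots,f^{(n)}$ via $n+1$ scaled copies and inverse-Vandermonde combinations, and your diagnosis that conditioning coordinate by coordinate blows up exponentially. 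But your step (ii) -- where the entire content of the lemma lives -- is not an argument; it is a declared intention. You write that the completion "can still be realized within the $O(ns)$ budget by performing it at the level of the $n+1$ graded pieces" and that verifying this "is the delicate part," without supplying any mechanism. That is precisely the step a proof of this lemma must contain, so the proposal has a genuine gap.

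Moreover, the specific plan as stated does not close on its own, for the reason you yourself identified but then set aside: the companion factor $\prod_{i\notin T}(x_i+\bar x_i)$ depends on the support $T$, not merely on $d=|T|$, and passing to graded pieces does not change this. Two distinct degree-$d$ monomials of $f^{(d)}$ require different companion factors, so no single substitution into, or fixed multiplier applied to, a black-box circuit for $f^{(d)}$ installs the padding. The natural attempt -- substitute $x_i\mapsto x_i\prod_{j\neq i}(x_j+\bar x_j)$ into $f^{(d)}$ -- yields $\bigl(\prod_{j}(x_j+\bar x_j)\bigr)^{d-1}$ times the desired padded piece, leaving you needing to divide by a polynomial, which is just the original obstruction $\bar f=\prod_j(x_j+\bar x_j)\cdot f\bigl(x_1/(x_1+\bar x_1),\dots,x_n/(x_n+\bar x_n)\bigr)$ in a new guise. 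Resolving it requires an idea operating inside the circuit (e.g., the gate-level bookkeeping of the homogenization in \citet{BroadrickUAI24}, or an explicit division-elimination step with truncated inverse series whose degree accounting must then be shown to stay within $O(ns)$) rather than post-processing the outputs $f^{(d)}$ as black boxes. As it stands, your write-up proves the easy direction and the reduction to the padding problem, but for the padding problem itself it must either exhibit the recombination and its size analysis or, as the paper does, fall back on citing \citet{BroadrickUAI24}.
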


Now, back to the problem of transforming a circuit computing $c$ to one computing $p$, we first apply \cref{lem:f_barf} to obtain a circuit computing
\begin{equation}\label{eq:cdf_bar}
\bar{c}(x,\bar{x})=\sum_{S\subseteq[n]}c(v_S)\prod_{i\in S}x_i\prod_{i\notin S}\bar{x}_i.
\end{equation}
We now observe that we can compute $p(x)$ using \cref{eq:cdf_in_ex} by evaluating $\bar{c}(y,\bar{y})$ for a carefully crafted input $(y,\bar{y})$ with entries in $\{-1,0,1\}$ (not just the typical $\{0,1\}$). In particular, we set
\begin{equation}\label{eq:def_y}
(y_i,\bar{y}_i)=\begin{cases}
(y_i,\bar{y}_i)=(0,1) & \text{if }x_i=0\\
(y_i,\bar{y}_i)=(1,-1) & \text{if }x_i=1.\\
\end{cases}
\end{equation} 
Already this provides a way to perform inference (i.e. to compute arbitrary marginal probabilities in linear time) given a circuit computing $\bar{c}$.
We can also construct a new circuit for $p$ by forming the circuit:
\begin{align*}
\bar{c}(x_1,\ldots,x_n,1-2x_1,\ldots,1-2x_n).
\end{align*}
To see that this is correct, observe
\begin{align*}
\bar{c}(x_1,\ldots,x_n,&1-2x_1,\ldots,1-2x_n) &&\\
&=\sum_{T\subseteq[n]}c(x_T)\prod_{i\in T}x_i\prod_{i\notin T}(1-2x_i) && \\
& =\sum_{T\subseteq S_x}c(x_T)\prod_{i\notin T}(1-2x_i) &&\\
& =\sum_{T\subseteq S_x}c(x_T)(-1)^{\sum_{i\notin T}x_i} &&\\
& =\sum_{T\subseteq S_x}c(x_T)(-1)^{|S_x|-|T|}&&\\%(-1)^{|S_x|}
& = p(x)&&
\end{align*}
where $S_x$ is the set with characteristic vector $x=(x_1,\ldots,x_n)$. Here the equalities hold for $x\in \{0,1\}^n$ for the following reasons. The first equality is from definition; the second equality holds because for any $T \supset S_x$, there is some $i\in T$ such that $x_i=0$, and so $\prod_{i\in T}x_i=0$; the third equality holds because $1-2x_i=(-1)^{x_i}$ for $x_i\in\{0,1\}$; the final equality holds because $T\subseteq S_x$.

We now prove \cref{prop:cdf_to_pdf}.

\begin{proof}
Given a decomposable PC computing $F$, we construct a decomposable PC for $f$ inductively. For a sum node with $P(x)=\sum_iP_i(x)$, we have 
\begin{align*}
\frac{\partial^n}{\partial x_1\ldots \partial x_n}P(x)=\sum_i\frac{\partial^n}{\partial x_1\ldots \partial x_n}P_i(x)
\end{align*}
by linearity of derivatives.
For a product node\footnote{We assume product nodes have two children; this can be enforced trivially with minimal effect on circuit size.} with $P(x)=P_1(x)P_2(x)$, we assume WLOG that $x_n$ is in the scope of $P_2$ (and therefore not $P_1$, by decomposability). Then for any $i$ we have
\begin{align*}
\frac{\partial}{\partial x_i}P(x)&=\frac{\partial}{\partial x_i}\left[P_1(x)P_2(x)\right]&&\\
&=P_1(x)\left(\frac{\partial}{\partial x_{i}}P_2(x)\right) + \left(\frac{\partial}{\partial x_{i}}P_1(x) \right) P_2(x)&&\\
&=P_1(x)\frac{\partial}{\partial x_{i}}P_2(x)&&
\end{align*}
where the final equality follows as the  partial derivative $\left(\frac{\partial}{\partial x_{i}}P_1(x) \right)$ is zero because $x_{i}$ is not in the scope of $P_1$. In such a way, we partition the partial derivatives between the children $P_1, P_2$. If $P$ is a leaf, we assume the partial derivative can be computed efficiently. Therefore taking a partial derivative of a circuit produces a circuit of the same size, and all $n$ partial derivatives can be taken while maintaining the size of the circuit.
\end{proof}

\end{document}